\documentclass[letterpaper]{article} % DO NOT CHANGE THIS
\usepackage{aaai2026} % [submission] % DO NOT CHANGE THIS
\usepackage{times}  % DO NOT CHANGE THIS
\usepackage{helvet}  % DO NOT CHANGE THIS
\usepackage{courier}  % DO ss CHANGE THIS
\usepackage[hyphens]{url}  % DO NOT CHANGE THIS
\usepackage{graphicx} % DO NOT CHANGE THIS
\usepackage{natbib}  % DO NOT CHANGE THIS AND DO NOT ADD ANY OPTIONS TO IT
\usepackage{caption} % DO NOT CHANGE THIS AND DO NOT ADD ANY OPTIONS TO IT
\usepackage{algorithm}
\usepackage{algorithmic}

\usepackage{booktabs}
\usepackage{multirow}
\usepackage{amsmath}
\usepackage{amssymb}
\usepackage{amsthm}
\usepackage{cleveref}

\usepackage{newfloat}
\usepackage{listings}
\DeclareCaptionStyle{ruled}{labelfont=normalfont,labelsep=colon,strut=off} % DO NOT CHANGE THIS
\floatstyle{ruled}
\newfloat{listing}{tb}{lst}{}
\floatname{listing}{Listing}
\usepackage{array}    
\usepackage{amsmath}
\usepackage{amssymb}

\newtheorem{theorem}{Theorem}[section]

\newtheorem{lemma}[theorem]{Lemma}

\newtheorem{definition}[theorem]{Definition}

\newtheorem{remark}{Remark}

\title{M-Loss: Quantifying Model Merging Compatibility with Limited Unlabeled Data}
\author{
    Tiantong Wang\textsuperscript{\rm 1},
    Yiyang Duan\textsuperscript{\rm 1},
    Haoyu Chen\textsuperscript{\rm1, \rm 2},
    Tiantong Wu\textsuperscript{\rm 3, \rm *},
    Wei Yang Bryan Lim\textsuperscript{\rm 1,}\protect\thanks{Corresponding Authors.}
}
\affiliations{
    \textsuperscript{\rm 1}College of Computing and Data Science, Nanyang Technological University\\
    \textsuperscript{\rm 2}School of Computer and lnformation Technology, Beijing Jiaotong University\\
    \textsuperscript{\rm 3}Alibaba-NTU Global e-Sustainability CorpLab (ANGEL)\\
     tiantong001@e.ntu.edu.sg,
     tiantong.wu@ntu.edu.sg,
     bryan.limwy@ntu.edu.sg
}

\begin{document}

\maketitle

\begin{abstract}
Training of large-scale models is both computationally intensive and often constrained by the availability of labeled data. Model merging offers a compelling alternative by directly integrating the weights of multiple source models without requiring additional data or extensive training. However, conventional model merging techniques, such as parameter averaging, often suffer from the unintended combination of non-generalizable features, especially when source models exhibit significant weight disparities.
Comparatively, model ensembling generally provides more stable and superior performance that aggregates multiple models by averaging outputs. However, it incurs higher inference costs and increased storage requirements. While previous studies experimentally showed the similarities between model merging and ensembling, theoretical evidence and evaluation metrics remain lacking. To address this gap, we introduce Merging-ensembling loss (\textit{M-Loss}), a novel evaluation metric that quantifies the compatibility of merging source models using very limited unlabeled data. By measuring the discrepancy between parameter averaging and model ensembling at layer and node levels, \textit{M-Loss} facilitates more effective merging strategies. Specifically, M-Loss serves both as a quantitative criterion of the \textbf{theoretical feasibility} of model merging, and a guide for \textbf{parameter significance} in model pruning. Our theoretical analysis and empirical evaluations demonstrate that incorporating M-Loss into the merging process significantly improves the alignment between merged models and model ensembling, providing a scalable and efficient framework for accurate model consolidation.
\end{abstract}

% Uncomment the following to link to your code, datasets, an extended version or similar.
% You must keep this block between (not within) the abstract and the main body of the paper.

\begin{links}
    \link{Code}{https://github.com/languangduan/mLoss}
\end{links}
% \begin{links}
%     \link{Extended version}{
%     arxiv xxxx.xxx
%     }
% \end{links}
\section{Introduction}

In large‐scale deep learning, the reliance on large-scale labeled datasets and intensive computation limits the applicability of supervised methods~\cite{cottier2024rising,bang2024vtrain,manchanda2024generative,rakaraddi2024graph,wang2021deep,faraboschi2024reducing,guo2024more}. Model merging offers a promising alternative by directly fusing the weights of multiple pretrained or fine‐tuned models into a single network, thereby reducing the cost for data collection and training~\cite{jin2022dataless,li2023deep,tiesyadav}. It is particularly effective when labeled data is scarce or unavailable~\cite{yang2023adamerging,xu2024training,li2025training}. Additionally, merging multiple models often enhances performance, as the merged model benefits from the diverse perspectives and learned representations of its individual source models.

Despite its potential, model merging faces several challenges. Traditional methods averaging parameters without pre-processing can merge non-generalizable features~\cite{cade2015model,maleki2022generalizability}. In scenarios where source models exhibit significant weight disparities, direct parameter averaging can yield poor results~\cite{li2021fedbn}. \textbf{Notably, existing research lacks a theoretical tool to assess model merging compatibility without labeled data.}

A theoretical challenge in model merging is understanding why and when multiple non-linear models can be effectively merged through simple weight averaging. While prior research has explored the feasibility of model merging by drawing parallels with model ensembling—such as Mode Connectivity (MC) and Feature Connectivity (FC) studies~\cite{freeman2016topology,garipov2018loss,draxler2018essentially,zhou2023goingllfc} -- these studies primarily rely on empirical observations or theoretical analyses under specific conditions.

Model ensembling~\cite{dash2004model,hoeting1999bayesian,lv2023parameter,sagi2018ensemble,wen2020batchensemble} aggregates the outputs of multiple source models through linear averaging. It generally outperforms parameter averaging by providing more stable and accurate predictions with concrete theoretical evidence. However, ensembling necessitates storing multiple sets of model parameters and performing inference across all models, leading to increased computational and storage overhead.

To quantify the gap between model merging and model ensembling, we propose \textit{M-Loss}, an evaluation metric designed to quantify the mergeability of source models. M-Loss measures the discrepancy between the outputs of a weight-merged model and those of an ensembled model using only unlabeled data. M-Loss can be applied to multiple scenarios to assess the mergeability of source models theoretically and predict the performance of the merged model without a labeled test set. M-Loss can also work as a reference for developing new merging methods. To make the problem setting concrete and to clarify how M-Loss interfaces with practical merging algorithms, we provide an overview in Figure 1. It summarizes (i) how M-Loss quantifies the discrepancy between parameter averaging and ensembling using only unlabeled data, and (ii) how the resulting scores guide row-wise pruning schedules that plug into standard merging backends. In summary, our contributions are:

\begin{figure*}[t]
  \centering
  % 优先使用 PDF 矢量图；若为 PNG，请保证 300–600 dpi
  % 将文件名替换为你的导出文件：mloss.pdf / mloss.png
  \includegraphics[width=0.7\textwidth]{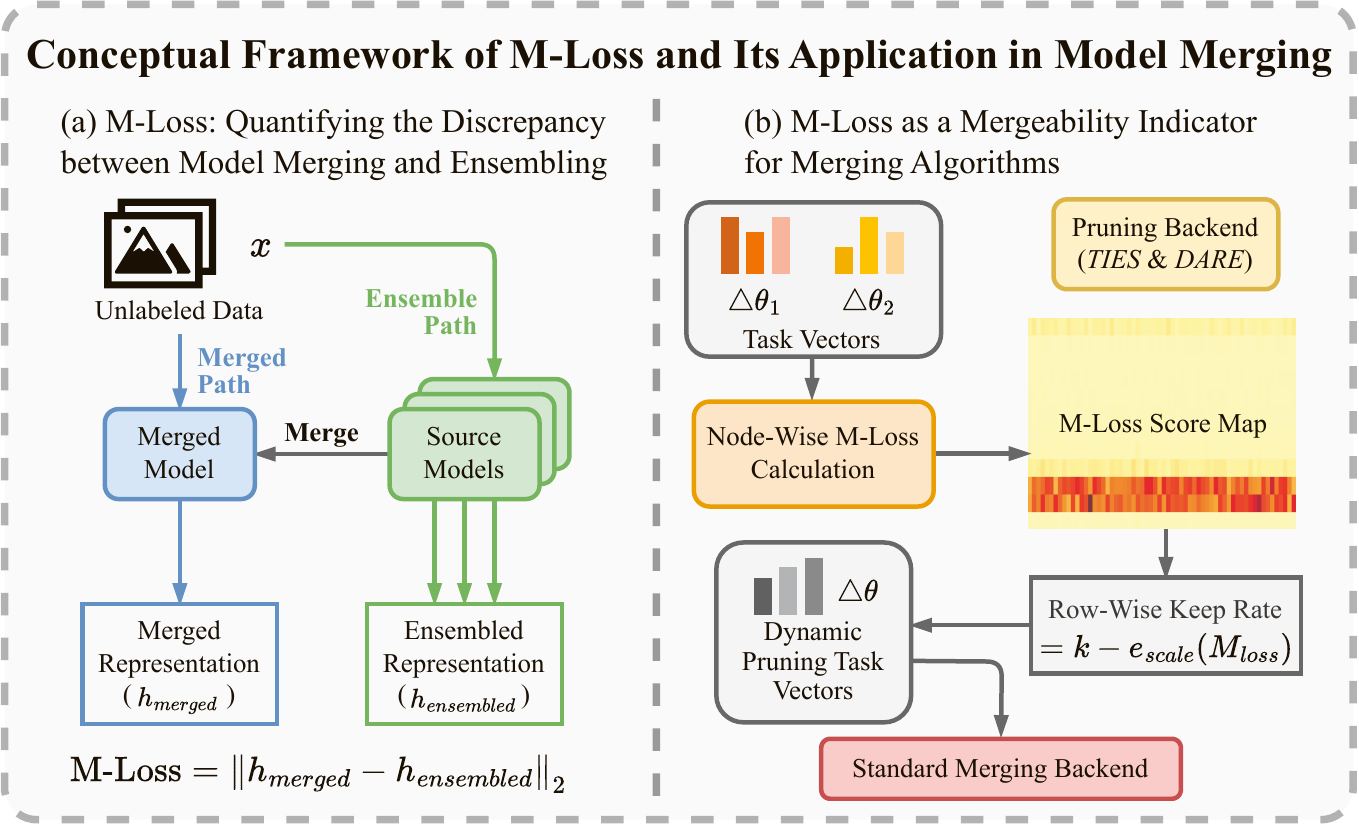}
  % \vspace{-6pt}
  \caption{
    Conceptual overview of M-Loss and its use in M-TIES.
    (a) M-Loss measures the discrepancy between parameter-averaged and ensembled representations on unlabeled data, producing layer-/node-wise scores.
    (b) The node-wise M-Loss score map drives dynamic row-wise keep rates, which integrate with standard merging backends (e.g., TIES Top-K or DARE) to improve mergeability and efficiency.
  }
  \label{fig:overview}
  % \vspace{-8pt}
\end{figure*}

\begin{enumerate}
    \item \textbf{M-Loss provides theoretical justification for model merging}: We address a fundamental question in model merging research: Why does a merged model, obtained through simple parameter averaging, closely approximate the ensembling of source models, despite non-linearity in model architecture? We propose a new evaluation metric called M-Loss, and derive the expectation of M-Loss under common activation functions, namely ReLU~\cite{nair2010rectifiedrelu}, GELU~\cite{hendrycks2016gaussiangelu}, and Leaky ReLU~\cite{maas2013rectifierleaky}. Our analysis provides theoretical justification for why fine-tuned models originated from a shared pretrained backbone can be effectively merged.
    
    \item \textbf{M-Loss as a reference for hyperparameter selection}: We show that M-Loss can act as a reliable indicator of parameter significance and inter-model conflicts, guiding the hyperparameter selection in model merging techniques such as TIES. By prioritizing parameters with lower M-Loss for retention, we propose the M-TIES method that employs a dynamical parameter pruning schedule for the merging process. 
\end{enumerate}

% This work establishes M-Loss as a foundational tool for advancing model merging techniques. By bridging the theoretical gap between parameter averaging and model ensemble, M-Loss not only provides rigorous justification for existing merging practices but also unlocks new opportunities for data-efficient and cross-domain model consolidation. Our empirical validation demonstrates its utility as a guide for optimizing merging algorithms. These insights pave the way for developing adaptive merging frameworks that dynamically balance performance and efficiency requirements for deploying scalable deep learning systems in real-world applications. The proposed methodology opens avenues for future research in automated model fusion and resource-constrained deep learning.

\section{Preliminaries and Related Works} 
\label{relatedworks}
\subsection{Model Merging}  
Model merging aims to consolidate multiple trained models into a single unified model by directly combining their parameters. Let $\{\theta^{(1)}, \theta^{(2)}, \dots, \theta^{(n)}\}$ denote the parameters of $n$ source models. A common approach of model merging is \textbf{simple averaging}~\cite{dash2004model,hoeting1999bayesian,wortsman2022modelsoups}, where the merged model $\theta_{\text{merged}}$ is computed as:  $\theta_{\text{merged}} = \frac{1}{n} \sum_{i=1}^n \theta^{(i)}$,
which implicitly assumes linear mode connectivity between models~\cite{garipov2018loss}. However, this simplistic approach faces great challenges under divergent optimization trajectories or when merging models fine-tuned for different tasks~\cite{li2019convergence}. To address this, recent methods like \textbf{Task Arithmetic}~\cite{ilharco2022taskarith} merge fine-tuned models by treating their parameter updates as additive task vectors relative to a shared pretrained initialization $\theta_{\text{pretrained}}$ as: $
\theta_{\text{merged}} = \theta_{\text{pretrained}} + \sum_{i=1}^n \frac{\lambda}{n} \left(\theta^{\left(i\right)} - \theta_{\text{pretrained}}\right),
$ with tunable hyperparameter $\lambda$ scaling the task vectors for better task adaptation. \textbf{AdaMerging}~\cite{yang2023adamerging} with 
$\theta_{\text{merged}} = \sum_{i=1}^n \alpha_i \theta^{(i)}$ learns the optimal merging weights $\{\alpha_i\}_{i=1}^n$ by output entropy. However, multiple rounds of merging are required to obtain the optimal weights, which increases the computational cost.
% Some pre-merging methods~\cite{tiesyadav,yu2024languagedare} further refine merging by pre-processing the task vectors before merging: 
% \[
% \theta_{\text{merged}} = \theta_{\text{pretrained}} + \sum_{i=1}^n \alpha_i \cdot \text{PreProcessing}\left(\theta^{(i)} - \theta_{\text{pretrained}}\right),
% \]  
% where $\text{PreProcessing}(\cdot)$ function aims to eliminate the parameter update conflicts between task vectors, such as different parameter updating directions, usually by pruning some unimportant or harmful parameters. These formulations highlight the tension between preserving useful features and suppressing interference during merging.

\subsection{Model Ensembling}  
Model ensembling~\cite{dash2004model,hoeting1999bayesian,lv2023parameter,sagi2018ensemble,wen2020batchensemble} aggregates predictions from multiple models rather than their parameters. For input $x$, the ensembled output $f_{\text{ens}}(x)$ is typically a weighted average of individual model predictions. For equally ensembled $n$ models:  
\[
f_{\text{ens}}(x) = \frac{1}{n} \sum_{i=1}^n f_{\theta^{(i)}}\left(x\right),
\]  
where $f_{\theta^{(i)}}\left(x\right)$ denotes the output of the $i$-th model. ensembling leverages the ``wisdom of the crowd'' effect, reducing variance and improving robustness~\cite{sagi2018ensemble}. While ensembling outperforms naive parameter averaging~\cite{lv2023parameter}, it requires storing all models and computing their joint outputs, which is infeasible for large-scale deployments.

% Recent work~\cite{zhou2023goingllfc} bridges merging and ensemble by empirically showing that under certain conditions a merged model approximates the ensemble: 
% \[
% \|f_{\theta_{\text{merged}}}(x) - f_{\text{ens}}(x)\| \leq \epsilon \quad \text{for small } \epsilon. 
% \]  
% However, this bound depends heavily on the source models' functional similarity and has been validated only through experiments. 

\subsection{Parameter Pruning in Model Merging}  
\label{ties}
Task vectors from different fine-tuned models could have direction conflicts, degrading the merged model's performance. Pruning resolves parameter conflicts during merging by selectively discarding weights in task vectors. Let the task vector from the $i$-th model be $\Delta^{(i)} = \theta^{(i)} - \theta_{\text{pretrained}}$, \textbf{TIES} ~\cite{tiesyadav} consists of three key stages: trim redundant parameters, elect dominant signs, and merge the aligned parameters.
 \textbf{DARE}~\cite{yu2024languagedare} shows the importance of rescaling after discarding. It replaces step 1 in TIES with randomly keeping $k\%$ parameters and rescaling the kept parameters by $1/k\%$. However, DARE requires the model to be large so that random trimming would not incur feature loss.

Our work introduces M-Loss to guide pruning by directly measuring functional discrepancies between merged and ensembled predictions, eliminating reliance on additional labeled data.

\section{Problem Formulation}
\label{sec:preliminaries}

\subsection{Problem Description}

Consider a set of models $\{M_1, M_2, \dots, M_q\}$ that share the same architecture, consisting of $L+1$ layers with input dimension $m_0$, hidden dimensions $m_1, \dots, m_L$, and output dimension $m_{L+1}$. The neural network function for model $M$ is defined as:
\begin{equation}
    f_M(x) = W^{L+1}_M \sigma\left(H^L_M \dots \sigma\left(H^1_M\left(x\right)\right)\right)+b_M^{L+1}, 
\end{equation}
where $H^k_M\left(x\right) = W_M^k x + b_M^k$, for $k = 1, \dots, L$, represents the linear transformation of the $k$-th layer. Here, $W_M^k$ and $b_M^k$ denote the weights and biases of the $k$-th layer of model $M$, respectively.

The pre-activation vector of the $j$-th layer is defined as:
\begin{equation}
    h_M^j\left(x\right) = H^{j}_M \sigma\left(H^{j-1}_M \dots \sigma\left(H^1_M x\right)\right), 
\end{equation}
for $j = 1, \dots, L$. We write $h_j$ for simplicity if $M$ and $x$ are clear from context. The $i$-th entry is written as $h_M^{j,i}(x)$ or $h_{j,i}$.

The discrepancy between model merging and model ensembling arises mainly from the non-linear activation functions. To address this, we investigate the flow of intermediate representations around the activation functions.

\subsection{Definitions}

To establish our M-Loss theory, we first need to introduce the following definitions: 

\begin{definition}[Intermediate Representation]
\label{intermediaterepresentation}
    The intermediate representation of a given input vector $x$ with respect to a model $M$ refers to the representation formed by partially passing the input vector $x$ through the model (e.g, pre-/post-activation vectors $h_k,x_k$, their entries $h_{k,i},x_{k,i}$, the input and output of hidden layers as shown in Figure 2.
\end{definition}

For example, the intermediate representation $h_k$ passes through the activation function in layer $k$ to obtain the post-activation intermediate representation $x_k$.

\begin{definition}[Linearly Correlated Model Parameters]
\label{linearcor} 
    For any intermediate representation $h_{k,i}$ of a model $M$ characterized by parameter set $\Theta$, the Linearly Correlated Model Parameters (LCP) of $h_{k,i}$ (or of node $N_{k,i}$) are defined as the parameter subset holding constant partial derivative determined by input vector only: $\{w \in \Theta \mid \frac{\partial h_{k,i}}{\partial w} = C \text{, where $C$ is a nonzero constant given by input vector $x$}\}$.
\end{definition}

For example, given a simple layer without bias, denoted as $h_k = Wx$, the LCP with respect to $h_{k,i}$ is the parameters in the $i$-th row of $W$ because $\frac{\partial h_{k,i}}{\partial W_{i,j}} = x_j$. An example is shown in Figure 2. 
\begin{figure}[t]
  \centering
  \includegraphics[width=\columnwidth]{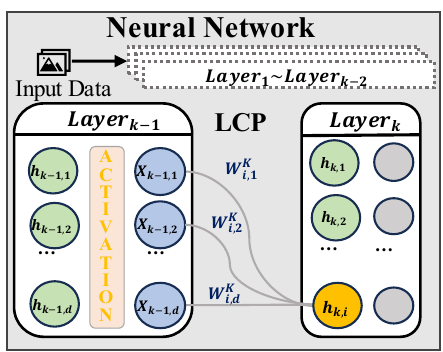}
  \caption{Visualization of Linearly Correlated Parameters (LCP) in a neural network. The figure illustrates how the partial derivative $\partial h_{k,i}/\partial W_{ij}^k$ relates to the input from the previous layer $x_{(k-1),j}$.}
  \label{fig:lcp-concept}
\end{figure}

% \begin{definition}[n-th Order Correlated Model Parameters]
%     For any intermediate representation $h$ of a model $M$ characterized by parameter set $\Theta$, the n-th Order Correlated Model Parameters (n-OCP) of $h$ are defined as the parameter subset $\{w \in \Theta \mid \frac{\partial^n h}{\partial w^n} = C \text{, where $C$ is a constant given a fixed input vector $x$}\}$.
% \end{definition}

\begin{definition}[M-Loss on Layer Level]
    For a given input vector \(x\), pretrained models \(\mathcal{M} = \{M_1, \dots, M_q\}\), merging weights \(\alpha=(\alpha_1, \dots, \alpha_q)\), and the \(j\)-th layer of model \(M\), the merging-ensembling loss (M-Loss) is defined as:
    \[
      h = \bigl(h^j_{M_1}(x), \dots, h^j_{M_q}(x)\bigr),
      \quad
      \alpha\!\cdot\!h = \sum_{p=1}^q \alpha_p\,h^j_{M_p}(x).
    \]
    \begin{equation}
      \mathcal{L}\bigl(W_{\mathcal{M}}^{j}\bigr)(x)
      = \Bigl\lVert
          \sigma(\alpha\!\cdot\!h)
        - \alpha\!\cdot\!\sigma(h)
        \Bigr\rVert_2
    \end{equation}
\end{definition}
where \(\lVert\cdot\rVert_2\) denotes the \(L_2\)-norm.

\begin{remark}
    If the merging weights \(\alpha_i\) are not specified, they are set to \(\tfrac{1}{q}\) by default for simple averaging.
\end{remark}

\begin{definition}[M-Loss on Node Level]
    For a given input vector \(x\), pretrained models \(\mathcal{M} = \{M_1, \dots, M_q\}\), merging weights \(\alpha = (\alpha_1, \dots, \alpha_q)\), and the \(i\)-th node of the \(k\)-th layer in model \(M\), the M-Loss is defined as the absolute value of output discrepancy:
    \[
      h = \bigl(h_{M_1}^{k,i}(x), \dots, h_{M_q}^{k,i}(x)\bigr),
      \quad
      \alpha\!\cdot\!h = \sum_{p=1}^q \alpha_p\,h_{M_p}^{k,i}(x).
    \]
    \begin{equation}
      \mathcal{L}\bigl(N_{k,i}^{\mathcal{M}}\bigr)(x)
      = \Bigl\lvert
          \sigma(\alpha\!\cdot\!h)
        - \alpha\!\cdot\!\sigma(h)
        \Bigr\rvert
    \end{equation}
\end{definition}
We write \(\mathcal{L}(N_{k,i})\) for simplicity if \(\mathcal{M}\) is clear from context.

\begin{remark}[Relation between Node and Layer M-Loss]
\label{remark:nodelayerlevel}
Note that by definition, we have: 
\begin{equation}
 \mathcal{L}\left(W_{\mathcal{M}}^{k}\right)(x) = \sqrt{\sum_{i=1}^{d}\mathcal{L}\left(N_{k,i}^{\mathcal{M}}\right)^2}, 
\end{equation}
where $d$ is the hidden dimension of the model.

\end{remark}

For models without layer normalization \cite{ba2016layernormalization}, we adopt the following normalized M-Loss to avoid the effect of scaling among layers.

\begin{definition}[Normalized M-Loss on Layer Level]
  For a given input \(x\), pretrained models \(\mathcal{M} = \{M_1, \dots, M_q\}\) and merging weights \(\alpha = (\alpha_1,\dots,\alpha_q)\), denote
  \[
    h = \bigl(h^j_{M_1}(x), \dots, h^j_{M_q}(x)\bigr),
    \quad
    \alpha\cdot h = \sum_{p=1}^q \alpha_p\,h^j_{M_p}(x).
  \]
  Then the normalized merging-ensembling loss is
  \begin{equation}
    \mathcal{L}_{\mathrm{norm}}\bigl(W_{\mathcal{M}}^{j}\bigr)(x)
    = \frac{\bigl\|\sigma(\alpha\!\cdot\!h)
          - \alpha\!\cdot\!\sigma(h)\bigr\|_2}
           {\|\sigma(\alpha\!\cdot\!h)\|_2 + \epsilon},
  \end{equation}
where \(\epsilon > 0\) is a small constant added in the denominator to ensure numerical stability (typically \(\epsilon = 10^{-4}\) or smaller).
\end{definition}

\begin{definition}[Normalized M-Loss on Node Level]
  For a given input vector \(x\), pretrained models \(\mathcal{M} = \{M_1, \dots, M_q\}\) and merging weights \(\alpha = (\alpha_1,\dots,\alpha_q)\), denote
  \[
    h = \bigl(h_{M_1}^{k,i}(x), \dots, h_{M_q}^{k,i}(x)\bigr),
    \quad
    \alpha\!\cdot\!h = \sum_{p=1}^q \alpha_p\,h_{M_p}^{k,i}(x).
  \]
  Then the normalized node-level merging-ensembling loss is
  \begin{equation}
    \mathcal{L}_{\mathrm{norm}}\bigl(N_{k,i}^{\mathcal{M}}\bigr)(x)
    = \frac{\bigl|\,
        \sigma(\alpha\!\cdot\!h)
        - \alpha\!\cdot\!\sigma(h)
      \bigr|}
      {\bigl|\sigma(\alpha\!\cdot\!h)\bigr| + \epsilon},
  \end{equation}
  where \(\epsilon > 0\) is a small constant added in the denominator to ensure numerical stability (typically \(\epsilon = 10^{-4}\) or smaller).
\end{definition}

Model merging operates on weights rather than alignment through non-linear layers. Empirical results \cite{jin2022dataless,dai2025leveraging} indicate that linear layers are more significant to the merging process than non-linear layers.
% For further study on non-linear layers such as attention layers~\cite{bahdanau2014neuralattention}, we provide a definition of M-Loss for any intermediate representation in an arbitrary type of layer.
% \begin{definition}[General M-Loss beyond Linear Layers]
%     The M-Loss for an intermediate representation extends the layer and node-level definitions to encompass the distributional divergence between the parameter-averaged approach and the ensemble approach. This can be quantified using metrics such as distributional divergence or Wasserstein distance.
% \end{definition}

\section{M-Loss as a Quantitative Criterion for Theoretical Merging Feasibility}

The central question on model merging feasibility is: why can simple parameter averaging approximate output-level ensembling for non-linear networks? In this section, we address this challenge under the M-Loss theory.

By Remark 2, layer-level M-Loss is from the accumulation of node-level M-Loss. Meanwhile, the discrepancy between model merging and ensembled output is from the accumulation of layer-level M-Loss, as the discrepancy is accumulated when passing through each activation function.
Elementally, we calculate the expected node-level M-Loss to bound the discrepancy between the model merging output and the ensembled output.

% For a theoretical analysis, we simplify the complexity of multiple models and uneven weights by merging two models with equal aggregation weights.

Consider the activation function $f(\cdot)$. We aim to compute the expected M-Loss on the node level as:
\[
\mathbb{E}[D] = \mathbb{E}_{x, a, b} \left[ \left| f\left( \frac{a + b}{2} \right) - \frac{f(a) + f(b)}{2} \right| \right],
\]
where:
\begin{itemize}
    \item \( x \sim \text{Uniform}(-k, k) \): Approximates the \textbf{intermediate representation} of the pre-activation value through the pretrained model.
    \item \( a, b \sim \mathcal{N}(x, \sigma^2) \): Approximate \textbf{intermediate representation} of pre-activation values through fine-tuned models, which correspond to two different $h_{k,i}$ in the previous section
    \item Assume that $k \gg \sigma$, as fine-tuned models have a small amount of parameter shift from the pre-trained model.
\end{itemize}

For the three commonly used activation functions: ReLU~\cite{nair2010rectifiedrelu}, GELU~\cite{hendrycks2016gaussiangelu}, and Leaky ReLU~\cite{maas2013rectifierleaky}, we have the following estimation (see Appendix for detailed derivations):
\[
\boxed{\mathbb{E}[D_{\mathrm{ReLU}}]  \approx \frac{\sigma^2}{\sqrt{2}\pi k}}
\]
\[
\boxed{\mathbb{E}[D_{\mathrm{Leaky ReLU,\alpha}}]  \approx \frac{(1-\alpha)\sigma^2}{\sqrt{2}\pi k}}
\]
\[
\boxed{\mathbb{E}[D_{\mathrm{GELU}}]  \approx \frac{\sigma^2}{4k}}
\]
As $k$ is the range of intermediate representation of the pretrained model, estimation of M-Loss indicates that merging models fine-tuned from a more general pretrained model (larger range $k$) would perform better. Meanwhile, $\sigma$ measures the expected parameter difference of source models, indicating that mergeeability is positively correlated with parameter similarity.

When the assumption $k \gg \sigma$ holds, the expectations calculated above are all small, so the expectation of M-Loss for merging fine-tuned models from the same pretrained model is small. Thus, we theoretically proved that the merged models obtained by averaging parameters behave similarly to averaging the outputs of each individual model. 

% \subsection{Proof Sketch}

% Here, we provide a sample proof sketch for the expectation of ReLU M-Loss. Note that the complete proof for the three activation functions is presented in Appendix~\ref{sec:appendix_proof}. 

% \begin{enumerate}
%     \item \textbf{Formulation of M-Loss:} The discrepancy $D_{\mathrm{ReLU}}$ is defined based on the midpoint of $a$ and $b$ compared to their individual ReLU outputs. Due to ReLU's non-linearity, $D$ is nonzero only when $a$ and $b$ are on opposite sides of zero.
    
%     \item \textbf{Probability of Nonzero M-Loss:} This event occurs with probability $2 \Phi(x/\sigma) \Phi(-x/\sigma)$, where $\Phi(\cdot)$ is the CDF of a standard normal distribution.
    
%     \item \textbf{Conditional Expectation of M-Loss:} Given that $a \geq 0$ and $b < 0$, the expected discrepancy is proportional to the expected absolute deviation of a Gaussian variable, yielding $\mathbb{E}[D | x] = \frac{\sigma}{\sqrt{2\pi}}$.
    
%     \item \textbf{Integration Over $x$:} Taking the expectation over $x$, the integral $\int_{-k}^{k} \Phi(x/\sigma) \Phi(-x/\sigma) dx$ is approximated by extending the limits to infinity (justified by $\sigma \ll k$). The resulting integral evaluates to $1/\sqrt{\pi}$.
    
%     \item \textbf{Final Approximation:} Substituting this result, we derive the final expression:
%     \[
%         \mathbb{E}[D_{\mathrm{ReLU}}] \approx \frac{\sigma^2}{\sqrt{2} \pi k}.
%     \]
% \end{enumerate}

\begin{algorithm}[tb]
   \caption{M-TIES Merging}
   \label{alg:layerwise_mt_merge}
\begin{algorithmic}[1]
   \STATE \textbf{Input:} Pretrained model $M$ with parameters $\theta$ consisting of $L$ layers $\theta^{(1)},\dots,\theta^{(L)}$; source models $M_1,\dots,M_q$ with parameters $\theta_1,\dots,\theta_q$; merging weights $\alpha_1,\dots,\alpha_q$; unlabeled dataset $\mathcal{D}$; base keep percentage $k\%$; pruning variation $e\%$.
   \STATE \textbf{Output:} Merged model $M_{\mathrm{merged}}$.
   \STATE Initialize $M_{\mathrm{merged}} \leftarrow M$.
   \FOR{$l = 1$ \TO $L$}
       \STATE \# Compute task vectors for layer $l$
       \FOR{$i = 1$ \TO $q$}
           \STATE $T_i^{(l)} \leftarrow \theta_i^{(l)} - \theta^{(l)}$
       \ENDFOR
       \STATE \# Compute node-wise M-Loss for layer $l$
       \FOR{$j = 1$ \TO $d_l$}
           \STATE Compute $\mathcal{L}(N_{l,j})$ on layer $l$.
       \ENDFOR
       \STATE \# Determine keep ratios based on M-Loss
       \FOR{$j = 1$ \TO $d_l$}
           \STATE $k_{l,j} \leftarrow \mathrm{RankNorm}\bigl(\mathcal{L}(N_{l,j})\bigr) \in [(k-e)\%,\,k\%]$
       \ENDFOR
       \STATE \# Prune task vectors at node level
       \FOR{$j = 1$ \TO $d_l$}
           \STATE Prune parameters in $\{T_i^{(l)}\}_{i=1}^q$ linearly correlated with $N_{l,j}$, keeping proportion $k_{l,j}$.
       \ENDFOR
       \STATE \# Elect sign and merge pruned task vectors
       \STATE $\Delta\theta^{(l)} \leftarrow \sum_{i=1}^q \alpha_i\,\mathrm{ElectPruned}(T_i^{(l)})$
       \STATE Update layer $l$: $\theta^{(l)} \leftarrow \theta^{(l)} + \Delta\theta^{(l)}$ in $M_{\mathrm{merged}}$.
       \STATE \# Synchronize merged layer to source models
       \FOR{$i = 1$ \TO $q$}
           \STATE $\theta_i^{(l)} \leftarrow \theta^{(l)}$
       \ENDFOR
   \ENDFOR
   \RETURN $M_{\mathrm{merged}}$.
\end{algorithmic}
\end{algorithm}

\section{M-Loss as an Indicator of Hyperparameter Selection}

\begin{table*}[t]
\centering

\setlength{\tabcolsep}{1.5mm} % 稍微调整列间距以适应新的列
\begin{tabular*}{\textwidth}{@{\extracolsep{\fill}}llrrrrrrrrr}
\toprule
\textbf{Backbone} & \textbf{Method} & \textbf{RESISC45} & \textbf{Cars} & \textbf{MNIST} & \textbf{DTD} & \textbf{EuroSAT} & \textbf{GTSRB} & \textbf{SUN397} & \textbf{SVHN} & \textbf{Avg} \\
\midrule
\multirow{6}{*}{ViT-B/32} 
& M-TIES           & \underline{72.60} & 61.07              & 97.62              & \underline{54.84} & \underline{82.02} & \underline{72.44} & 62.19              & 83.06              & \underline{73.23} \\
& TIES             & 70.67              & 58.61              & 98.30              & 54.20              & 80.22              & 72.11              & 59.01              & \textbf{86.20}     & 72.42             \\
& Task Arithmetic  & 71.27              & 60.70              & 95.32              & 51.76              & 79.74              & 67.32              & 62.06              & 76.68              & 70.61             \\
& Simple Avg.      & 71.46              & \underline{63.34}  & 87.46              & 50.11              & 73.00              & 52.79              & \underline{64.91}  & 64.16              & 65.90             \\
& DARE             & 69.97              & 57.98              & \underline{97.95}  & 53.24              & 78.89              & 72.00              & 59.14              & \underline{83.96}  & 71.64             \\
& Ensembling         & \textbf{79.87}     & \textbf{66.60}     & 95.80              & \textbf{58.30}     & \textbf{98.30}     & \textbf{81.11}     & \textbf{66.35}     & 82.15              & \textbf{78.56}    \\
\midrule
\multirow{6}{*}{ViT-L/14} 
& M-TIES           & \textbf{88.57}     & \underline{83.35}  & \textbf{99.06}     & \textbf{66.91}     & \underline{94.61}  & 83.80              & \textbf{76.13}     & \textbf{89.78}     & \underline{85.28}  \\
& TIES             & 88.19              & 82.81              & \underline{99.01}  & 66.70              & 94.37              & 83.36              & \underline{75.65}  & \underline{89.42}  & 84.94             \\
& Task Arithmetic  & 86.17              & 82.44              & 98.54              & 65.59              & 93.93              & 83.47              & 73.56              & 85.26              & 83.62             \\
& Simple Avg.      & 82.67              & 81.54              & 97.01              & 62.77              & 91.17              & 70.63              & 71.65              & 78.23              & 79.46             \\
& DARE             & \underline{88.33}  & \underline{83.35}  & 98.97              & \underline{66.86}  & 94.06              & \underline{84.20}  & 75.37              & 89.19              & 85.04             \\
& Ensembling         & 87.73              & \textbf{85.36}     & 98.78              & 66.81              & \textbf{98.24}     & \textbf{87.92}     & 74.76              & 84.92              & \textbf{85.56}    \\
\bottomrule
\vspace{-15pt}

\end{tabular*}
\caption{Accuracy comparison of merging methods on ViT-B/32 and ViT-L/14 backbones. For each backbone and each dataset, the best result is in bold and the second-best is underlined. M-TIES consistently outperforms other merging methods and achieves competitive results against the much more costly Ensembling baseline, especially on the larger ViT-L/14 model.}
\label{tab:main_results_merged}
\end{table*}

We show that M-Loss can act as an indicator of parameter conflicts between models, guiding the pruning budget schedule of other merging methods such as TIES. By prioritizing parameters with lower M-Loss for retention, we develop a dynamic pruning rate scheduler as a plug-in to the existing merging method TIES~\cite{tiesyadav}.

Different from the TIES method with a fixed trimming rate, we propose an enhanced pruning strategy inspired by TIES merging, leveraging the M-Loss metric. We determine the pruning (trimming) proportion in a more dedicated way, computed from the M-Loss.

\noindent\textbf{M-TIES Merging} receives a pretrained base model \(M\) (parameters \(\theta\)), \(q\) source models \(M_1,\dots,M_q\) (parameters \(\theta_1,\dots,\theta_q\)), merging weights \(\alpha_1,\dots,\alpha_q\), an unlabeled dataset \(\mathcal{D}\), a base keep-rate \(k\%\) and a pruning variation \(e\%\). It then processes each layer \(l=1,\dots,L\) in turn: first computing task vectors \(T_i^{(l)}=\theta_i^{(l)}-\theta^{(l)}\) for all sources; next evaluating a node-level M-Loss \(\mathcal{L}(N_{l,j})\) on \(\mathcal{D}\); rank-normalizing these losses into \([(k-e)\%,k\%]\) to obtain per-node keep ratios \(k_{l,j}\); pruning each \(T_i^{(l)}\) by zeroing out the smallest in absolute value \((100-k_{l,j})\%\) of parameters linearly correlated with node \(j\); then elect the dominant sign of each entry, forming a weighted merge increment \(\Delta\theta^{(l)}=\sum_i\alpha_i\,\mathrm{ElectPruned}(T_i^{(l)})\) which is added back to the base layer: \(\theta^{(l)}\!\leftarrow\!\theta^{(l)}+\Delta\theta^{(l)}\). Finally, the updated \(\theta^{(l)}\) is synchronized to all sources before proceeding to the next layer. After all layers are merged, the resulting model \(M_{\mathrm{merged}}\) is returned.
This method effectively balances the integration of multiple models while selectively pruning redundant or conflicting parameters, ensuring an optimal trade-off between model performance and parameter efficiency.

% \section{Experiments}

% \begin{table*}[t]
% \centering
% \setlength{\tabcolsep}{1mm} % 压缩列间距
% \begin{tabular*}{\textwidth}{@{\extracolsep{\fill}}lrrrrrrrrr}
% \toprule
% Method          & RESISC45       & Cars           & MNIST          & DTD            & EuroSAT        & GTSRB          & SUN397         & SVHN           & Avg            \\
% \midrule
% TIES            & 70.67          & 58.61          & 98.30          & 54.20          & 80.22          & 72.11          & 59.01          & \textbf{86.20} & 72.42          \\
% Layerwise-TIES  & 70.98          & 58.57          & \textbf{98.36} & 54.79          & 81.46          & 70.85          & 58.08          & 86.16          & 72.41          \\
% M-TIES          & \textbf{72.60} & \textbf{61.07} & 97.62          & \textbf{54.84} & \textbf{82.02} & \textbf{72.44} & \textbf{62.19} & 83.06          & \textbf{73.23} \\
% \bottomrule
% \vspace{-15pt}
% \end{tabular*}
% \caption{Ablation study comparing TIES, Layerwise‑TIES, and M‑TIES on ViT‑B/32 models.}
% \label{layerwiseablation}
% \end{table*}

\begin{table*}[t]
\centering
\setlength{\tabcolsep}{1mm} % 压缩列间距以适应宽度
\begin{tabular*}{\textwidth}{@{\extracolsep{\fill}}lrrrrrrrrr}
\toprule
Sample Size & RESISC45 & Cars   & MNIST  & DTD    & EuroSAT & GTSRB  & SUN397 & SVHN   & Avg    \\
\midrule
128         & 72.603   & 61.074 & 97.620 & 54.840 & 82.019  & 72.439 & 62.195 & 83.063 & 73.232 \\
256         & 72.571   & 61.074 & 97.620 & 54.840 & 82.000  & 72.462 & 62.186 & 83.059 & 73.227 \\
512         & 72.571   & 61.074 & 97.620 & 54.840 & 82.019  & 72.454 & 62.176 & 83.059 & 73.227 \\
\bottomrule

\end{tabular*}
\caption{Accuracy of M‑TIES merging ViT‑B/32 fine‑tuned models with different sample sizes for computing M‑loss.}
\label{samplesize}
\end{table*}

\begin{table*}[t]
\centering
\setlength{\tabcolsep}{1mm} % 适度压缩列间距
\begin{tabular*}{\textwidth}{@{\extracolsep{\fill}}lrrrrrrrrr}
\toprule
Random Seed & RESISC45 & Cars & MNIST & DTD & EuroSAT & GTSRB & SUN397 & SVHN & Avg \\
\midrule
1  & 72.603 & 61.074 & 97.62 & 54.840 & 82.278 & 72.478 & 61.855 & 83.063 & 73.225 \\
2  & 72.587 & 61.099 & 97.61 & 54.840 & 81.611 & 72.486 & 62.025 & 83.059 & 73.165 \\
42 & 72.603 & 61.074 & 97.62 & 54.840 & 82.019 & 72.439 & 62.195 & 83.063 & 73.232 \\
\midrule
Std & 0.009 & 0.014 & 0.006 & 0.000 & 0.336 & 0.025 & 0.170 & 0.002 & 0.037 \\
\bottomrule

\end{tabular*}
\caption{Accuracy and standard deviation of M-TIES merging ViT-B/32 fine-tuned models under different random seeds.}
\label{vitb32_seed_variance}
\end{table*}

\begin{table*}[t]
\centering
\setlength{\tabcolsep}{1mm} % 适度压缩列间距
\begin{tabular*}{\textwidth}{@{\extracolsep{\fill}}lrrrrrr}
\toprule
Method    & Flowers102 & FashionMNIST & Food101 & STL10  & CIFAR100 & Avg    \\
\midrule
M‑TIES    & \underline{54.82} & \underline{62.94} & \underline{61.02} & \underline{89.90} & \underline{48.60} & \underline{63.46} \\
TIES      & 48.40             & 58.61             & 52.35             & 87.60             & 40.78             & 57.55             \\
DARE      & 51.86             & 62.28             & 55.14             & 87.80             & 44.19             & 60.25             \\
Ensembling  & \textbf{62.77}    & \textbf{67.93}    & \textbf{72.61}    & \textbf{96.00}    & \textbf{61.70}    & \textbf{72.20}    \\
\bottomrule

\end{tabular*}
\caption{Accuracy comparison of different merging methods on out‑of‑domain datasets, using ViT‑B/32 models.}
\label{ood}
\end{table*}

\section{Experiments}

\subsection{Experimental Setup}
To validate the effectiveness of the proposed M-TIES method, we conduct model merging on Vision Transformer (ViT)~\cite{alexey2020imagevit} with pretrained OpenAI CLIP~\cite{radford2021learningclip} ViT-B/32 and ViT-L-14 models. We use the open-source model checkpoints fine-tuned on eight datasets of diverse types of tasks -- RESISC45~\cite{resisccheng2017remote}, Cars~\cite{Carskrause20133d}, MNIST~\cite{lecun1998gradientmnist}, DTD~\cite{cimpoi2014describingdtd}, EuroSAT~\cite{helber2019eurosat}, GTSRB~\cite{stallkamp2012mangtsrb}, SUN397~\cite{xiao2010sun397} and SVHN~\cite{svhnnetzer2011reading} as source models. The aggregation weights $\alpha_i$'s are set to be equal. We compare our M-TIES method with other model merging baselines, namely \textbf{Simple Average}~\cite{dash2004model,hoeting1999bayesian,wortsman2022modelsoups}, ~\textbf{Task Arithmetic}~\cite{ilharco2022taskarith}, ~\textbf{TIES}~\cite{tiesyadav} and \textbf{DARE}~\cite{yu2024languagedare}. We discuss in the Appendix for baseline selection. For the unlabeled dataset selection, we only use 128 samples in total, which means 16 samples for each dataset on average. This is a very few-shot setting and is practical in real-world scenarios. For other layers, as they are less important in merging~\cite{jin2022dataless,dai2025leveraging} and require a more complex procedure for computing M-Loss than linear layers, we adopt vanilla trimming with TIES. We conduct all experiments using a single NVIDIA RTX A6000 GPU.

% \subsection{M-Loss as an Indicator of Mergeability}

% To evaluate the effectiveness of M-Loss as an indicator of the mergeability of different models, we perform the following experiment:

% \begin{enumerate}
%     \item \textbf{Dataset Splitting}: Split a benchmark dataset into \( n \) non-overlapping subsets.
%     \item \textbf{Model Fine-tuning}: For each subset, fine-tune a pretrained base model to obtain \( n \) distinct source models.
%     \item \textbf{Model Pair Merging}: Merge each pair of source models using our M-Loss guided pruning strategy.
%     \item \textbf{Performance Evaluation}: For each merged model, evaluate its accuracy on a held-out test set and compare it to the average accuracy of the two source models.
%     \item \textbf{Correlation Analysis}: Compute the M-Loss for each pair of source models and analyze the correlation between M-Loss values and the corresponding accuracy improvements.
% \end{enumerate}

% \subsection{M-Loss as an Indicator of Parameter Significance}

% % \subsubsection{Experiment Design}

% We demonstrate that M-Loss can act as an indicator of parameter significance, guiding the parameter budget schedule of other merging methods such as TIES. The experiment involves ranking parameters based on their M-Loss values and using this ranking to prioritize which parameters to retain during the pruning process.

% \section{Experiments on TIES}
% \section{Experimental Setup}

\subsection{Experimental Results}

The experimental results for our method, four merging baselines, and ensembling are shown in Table 1, covering experiments on both the ViT-B/32 and the larger ViT-L/14 model. For each method except for simple averaging, we first conduct a hyperparameter search on the validation set using the ViT-B/32 model and report the best average accuracy. The hyperparameter search range is included in the Appendix. We then repeat the experiments on the ViT-L/14 model using these same fixed hyperparameters. As explained in the table's caption, for each backbone, numbers in \textbf{bold} are the top accuracies for the task, and \underline{underlined} numbers denote the second best.

% We present the experimental results of four merging baselines, ensemble, and our method with the ViT-B/32 model in Table 1. For each method except for simple averaging, we conduct a hyperparameter search on the validation set and report the best average accuracy. The hyperparameter search range is included in the Appendix. We also repeat the experiments using the ViT-L/14 model with a larger model size as in Table 2 using fixed hyperparameters. Note that in both Table 1 and Table 2, numbers in bold are the top accuracies for the task, and underlined numbers denote the second best. % for a validation on model with larger size, with a fixed hyperparameter representing the case where no validation set is available.

\subsection{Experiment Results Analysis}

\subsubsection{General Comparison}
From Table 1, we can see that our proposed M-TIES method outperforms other merging baselines in both ViT-B/32 and ViT-L/14 model backbones, while below that of model ensembling. This verifies our motivation to bridge the gap between merging and ensembling by M-Loss. Meanwhile, calculation gives the variance of accuracy for top-3 merging methods M-TIES, TIES, and DARE are 172.22 and 203.27. and 197.61 (in ViT-B/32) respectively, which shows that M-TIES holds better stability and fairness (not favoring high-accuracy tasks). Additionally, when the accuracy gap between merging and ensembling is small, M-TIES even outperforms ensembling for certain tasks.

\begin{table*}[t]
\centering
\setlength{\tabcolsep}{1mm} % 适度压缩列间距
\begin{tabular*}{\textwidth}{@{\extracolsep{\fill}}lrrrrrrrrr}
\toprule
Layer Index       & RESISC45 & Cars    & MNIST  & DTD    & EuroSAT & GTSRB  & SUN397 & SVHN   & Avg    \\
\midrule
All                & 72.603   & 61.074  & 97.620 & 54.840 & 82.019  & 72.439 & 62.195 & 83.063 & 73.232 \\
0,8,9,10          & 72.619   & 61.112  & 97.570 & 55.000 & 82.000  & 72.328 & 62.337 & 82.760 & 73.216 \\
8,9,10            & 72.619   & 61.099  & 97.560 & 55.000 & 82.074  & 72.312 & 62.333 & 82.771 & 73.221 \\
\bottomrule
\vspace{-15pt}
\end{tabular*}
\caption{Accuracy of M‑TIES merging ViT‑B/32 fine‑tuned models with different layers pruned by M‑loss.}
\label{vitb32_layers}
\end{table*}

\begin{table*}[t]
\centering
\setlength{\tabcolsep}{1mm} % 适度压缩列间距
\begin{tabular*}{\textwidth}{@{\extracolsep{\fill}}lrrrrrrrrr}
\toprule
Layer Index       & RESISC45 & Cars    & MNIST  & DTD    & EuroSAT & GTSRB  & SUN397 & SVHN   & Avg    \\
\midrule
All                & 88.571   & 83.348  & 99.060 & 66.915 & 94.611  & 83.800 & 76.134 & 89.782 & 85.278 \\
0,\,20,\,21,\,22   & 88.587   & 83.273  & 99.070 & 66.809 & 94.593  & 83.903 & 76.088 & 89.747 & 85.259 \\
20,\,21,\,22       & 88.571   & 83.273  & 99.070 & 66.809 & 94.611  & 83.895 & 76.065 & 89.751 & 85.256 \\
\bottomrule
\end{tabular*}
\caption{Accuracy of M‑TIES merging ViT‑L/14 fine‑tuned models with different layers pruned by M‑loss.}
\label{vitl14_layers}
\end{table*}

\subsubsection{Computational Cost Analysis}
As in our algorithm, when merging layer $k$, the previous $(k-1)$ models would have been updated to the same merged weight, so we only need to pass the input to the previous layer once and get the shared input of layer $k$, then compute the M-Loss for nodes in layer $k$. The inference cost is passing input to $(k-1+q)$ layers, which is much less than passing the input to each model individually, which needs $kq$ layers ($q$ is the number of source models).

For the empirical time consumption, M-TIES does not add a significant time cost. M-TIES uses approximately 1 minute and 30 seconds and 3 minutes for merging ViT-B/32 and ViT-L/14 models, respectively, compared with the TIES method, which takes 30 seconds and 1 minute, as the computation of M-Loss only involves forward inference of a small unlabeled dataset. The main time cost for the experiment is the evaluation process, which takes around 5 minutes and 15 minutes for ViT-B/32 and ViT-L/14, respectively.

% \subsection{Sensitivity Analysis}

% Hyperparameter $e$ determines the range of dynamic retention rate $k$ computed from M-Loss to ensure that the retention rate for a row is within the range of $[k-e,k]$. In this section, we show that the accuracy of our M-TIES method is insensitive to the selection of hyperparameter $e$. 

% We conduct merging experiments of M-TIES on the ViT-B/32 model, with fixed $k=0.2$ and $e$ selected from $\{0.05, 0.075, 0.1, 0.125, 0.15\}$. As shown in \cref{fig:sensitivity}, the average accuracy's fluctuation is within 0.1\%, with the highest on $e = \frac{k}{2}$. We conclude that the choice of hyperparameter $e$ is not crucial to the merged model performance, provided that $e$ is not too small (e.g., $e = 0$, resulting in the same trimming process as TIES). 

% \begin{figure}[htpb]
% \centering
% \includegraphics[width=0.5\linewidth]{AAAI_AuthorKit26/images/k_e.pdf}
% \caption{Performance stability analysis with varying $\varepsilon$-neighborhood parameter $e$ and fixed $k=0.2$. The classification accuracy remains relatively stable ($77.107\%$-$77.203\%$) across different $e$ values, demonstrating the robustness of the proposed method to this hyperparameter.}
% \label{fig:sensitivity}
% \end{figure}

\begin{figure}[t]
\centering
\includegraphics[width=0.4\textwidth]{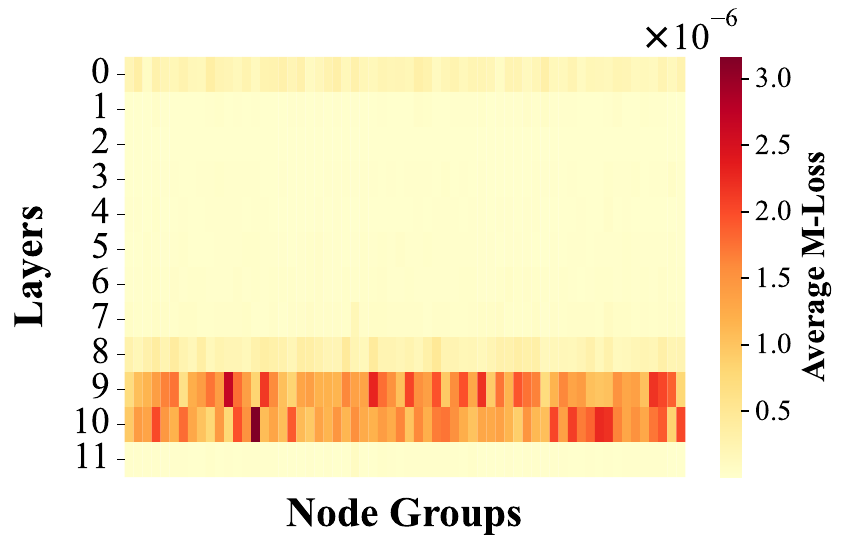}
\caption{Layerwise node group M-Loss distribution across different layers of ViT-B/32 models. Each colored block reveals the average M-Loss of 50 consecutive nodes, with the x-axis being the node group number and the y-axis being the layer number.}
\label{fig:mloss_heatmap}
\end{figure}

% \subsection{Ablation Study on Layerwise TIES}

% As our M-TIES compares parameter magnitudes on a layerwise basis, we study the case where the trimming step in TIES is not taken by ranking and keeping top-$k\%$ absolute value parameters globally, but does the ranking inside each layer, similar to M-TIES, namely Layerwise TIES. As shown in Table 3, adopting the layerwise method has no performance gain.

% As shown in Table 3, Layerwise TIES has no accuracy increase compared with vanilla TIES, while our M-TIES method has an increase of 0.81\%. This shows that improvement of M-TIES does not result from changing the global ranking top-k\% to layerwise ranking top-k\% in the trimming procedure.

\subsection{Sampling Efficiency and Stability over Randomness in Evaluating M-Loss}

Table 2 shows that computing M-Loss only requires very limited unlabeled data. As we vary the sample size for evaluating the M-Loss, we find that there is little change in the final results. Thus, we chose only one batch of 128 samples in our experiments. For the randomness test, we run the experiments over three random seeds for drawing 128 samples as in Table 3, with a low standard deviation reported. It shows that our method holds great stability over random sampling.

\subsection{Out-of-Domain Generalizability of M-TIES}
We also conduct out-of-domain testing, by evaluating the ensembled model and top-3 ViT-B/32 models merged as in Table 1 on Flowers 102~\cite{flowers102}, FashionMNIST~\cite{xiao2017fashion}, Food 101~\cite{bossard2014food101}, STL 10~\cite{STLpmlr-v15-coates11a}, and CIFAR100~\cite{cifar100krizhevsky2009learning}. As shown in Table 4, M-TIES outperforms other baselines except for ensembling, which shows its great generalizability on unseen tasks.

\subsection{Few-Layer M-TIES: a More Efficient Alternative}
We also analyze the M-Loss value across different nodes in different layers during the merging process in Figure 3. We find that the first layer and the last few layers (excluding the output layer) hold higher M-Loss than other layers.

With this insight, we propose Few-Layer M-TIES, which only uses a dynamic pruning method for the first and last few layers, so as to save the computational cost. Experiment results in Table 5 and Table 6 show that there is almost no change to the performance, but the computational cost is largely reduced.

\section{Conclusion}

This paper introduces \textbf{M-Loss}, a novel metric that quantifies the gap between parameter-averaged and output-averaged models without relying on labeled data. By computing the expected M-Loss for common activation functions, we provide theoretical justification for the conditions where parameter averaging yields predictions close to model ensembling, thereby establishing theoretical model merging feasibility. To show that M-Loss can be integrated with a concrete merging method, we integrate the M-Loss dynamic budget scheduler into the TIES merging framework, guiding the selective removal of conflicting parameters. The integration leads to superior performance compared to existing methods. Empirical evaluation results on ViT models underscore M-Loss's effectiveness in identifying crucial parameters. Overall, this work advances the theoretical foundations of model merging and contributes practical tools for the efficient merging of multiple models.

\section*{Acknowledgments}
This research is supported by the NTU startup grant and the RIE2025 Industry Alignment Fund –
Industry Collaboration Projects (IAF-ICP) (Award I2301E0026), administered by A*STAR, as well
as supported by Alibaba Group and NTU Singapore through Alibaba-NTU Global e-Sustainability
CorpLab (ANGEL). This research/project is supported by A*STAR under its Japan-Singapore
Joint Call: JST-A*STAR 2024 (Project ID: R24I6IR139).

\bibliography{aaai2026}

\setcounter{secnumdepth}{0}
\renewcommand\thesubsection{\arabic{subsection}}
\renewcommand\labelenumi{\thesubsection.\arabic{enumi}}

\newcounter{checksubsection}
\newcounter{checkitem}[checksubsection]

\newcommand{\checksubsection}[1]{%
  \refstepcounter{checksubsection}%
  \paragraph{\arabic{checksubsection}. #1}%
  \setcounter{checkitem}{0}%
}

\newcommand{\checkitem}{%
  \refstepcounter{checkitem}%
  \item[\arabic{checksubsection}.\arabic{checkitem}.]%
}
\newcommand{\question}[2]{\normalcolor\checkitem #1 #2 \color{blue}}
\newcommand{\ifyespoints}[1]{\makebox[0pt][l]{\hspace{-15pt}\normalcolor #1}}

\appendix
\onecolumn
\section{Proof for Expectation of M-loss}
\label{sec:appendix_proof}

To calculate the expectation of M-loss, first we need to prove the following lemma:
\begin{lemma}[Integral of Product of Normal CDFs]
\label{lemma:phi_integral}
Let \(\Phi(u)\) denote the cumulative distribution function (CDF) of the standard normal distribution:
\[
\Phi(u) = \frac{1}{\sqrt{2\pi}} \int_{-\infty}^u e^{-t^2/2} \, dt.
\]
Then the integral of the product \(\Phi(u)\Phi(-u)\) over the real line is:
\[
\int_{-\infty}^\infty \Phi(u)\Phi(-u) \, du = \frac{1}{\sqrt{\pi}}.
\]
\end{lemma}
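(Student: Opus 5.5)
The cleanest route is probabilistic. Let $Z_1, Z_2$ be independent standard normal variables. Then $\Phi(u)=P(Z_1\le u)$ and $\Phi(-u)=P(Z_2\ge u)$. By independence,
\[
\Phi(u)\Phi(-u)=P(Z_1\le u\le Z_2).
\]
Integrating over $u\in\mathbb{R}$ and using Tonelli's theorem (the integrand is nonnegative) gives
\[
\int_{-\infty}^\infty \Phi(u)\Phi(-u)\,du=\mathbb{E}\Bigl[\int_{-\infty}^{\infty}\mathbf{1}\{Z_1\le u\le Z_2\}\,du\Bigr]=\mathbb{E}\bigl[(Z_2-Z_1)^+\bigr].
\]
So the integral equals the expected positive part of $Z_2-Z_1$.

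Next, $Z_2-Z_1\sim\mathcal{N}(0,2)$. By symmetry,
\[
\mathbb{E}[(Z_2-Z_1)^+]=\tfrac12\,\mathbb{E}|Z_2-Z_1|.
\]
For $Y\sim\mathcal{N}(0,s^2)$ the standard half-normal identity is $\mathbb{E}|Y|=s\sqrt{2/\pi}$. With $s=\sqrt2$ this gives $\mathbb{E}|Z_2-Z_1|=2/\sqrt{\pi}$, so the integral is $\tfrac12\cdot 2/\sqrt{\pi}=1/\sqrt{\pi}$, as claimed.

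As a sanity check, there is an analytic route. The integrand is even and decays like a Gaussian tail, so the integral converges. Integrate by parts with $v=u$ and $d(\Phi(u)\Phi(-u))=\varphi(u)\bigl(\Phi(-u)-\Phi(u)\bigr)\,du$. The boundary terms vanish because $u\Phi(u)\Phi(-u)\to0$ at $\pm\infty$. This leaves
\[
\int u\,\varphi(u)\bigl(2\Phi(u)-1\bigr)\,du=2\int u\,\varphi(u)\Phi(u)\,du.
\]
A second integration by parts, using $u\varphi(u)=-\varphi'(u)$, turns this into $2\int\varphi(u)^2\,du=2\cdot\frac{1}{2\sqrt{\pi}}=1/\sqrt{\pi}$.

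The only step needing care is justifying the interchange of integral and expectation. Tonelli's theorem handles it, since everything is nonnegative, so I expect no real obstacle.
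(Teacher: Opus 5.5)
Your proof is correct and follows essentially the same route as the paper: both read $\Phi(u)\Phi(-u)$ as $P(X\le u\le Y)$ for independent standard normals, swap integral and expectation to get $\mathbb{E}[(Y-X)^+]$ with $Y-X\sim\mathcal{N}(0,2)$, and evaluate this to $1/\sqrt{\pi}$. The differences are minor: you justify the interchange explicitly via Tonelli and use the half-normal identity where the paper integrates $z e^{-z^2/4}$ directly, and your integration-by-parts sanity check is also correct.
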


\begin{proof}
We prove this result through probabilistic interpretation and symmetry.

\textbf{Step 1: Symmetry Property.}  
Note that \(\Phi(-u) = 1 - \Phi(u)\), so the integrand becomes:
\[
\Phi(u)\Phi(-u) = \Phi(u) - \Phi(u)^2.
\]

\textbf{Step 2: Probabilistic Interpretation.}  
Let \(X\) and \(Y\) be independent standard normal random variables. Then:
\[
\Phi(u) = P(X \leq u), \quad \Phi(-u) = P(Y \geq u).
\]
The product \(\Phi(u)\Phi(-u)\) represents the joint probability:
\[
P(X \leq u \leq Y).
\]

\textbf{Step 3: Integral as Expected Value.}  
The integral can be rewritten as:
\[
\int_{-\infty}^\infty P(X \leq u \leq Y) \, du = E\left[ \int_{-\infty}^\infty I(X \leq u \leq Y) \, du \right],
\]
where \(I(\cdot)\) is the indicator function. For fixed \(X\) and \(Y\), the inner integral equals \( \max(Y - X, 0) \). Thus:
\[
\int_{-\infty}^\infty \Phi(u)\Phi(-u) \, du = E\left[ \max(Y - X, 0) \right].
\]

\textbf{Step 4: Distribution of \(Y - X\).}  
Since \(X\) and \(Y\) are independent \( \mathcal{N}(0,1) \), the difference \(Z = Y - X\) follows \( \mathcal{N}(0, 2) \). Therefore:
\[
E\left[ \max(Z, 0) \right] = \frac{1}{2\sqrt{\pi}} \int_0^\infty z e^{-z^2/4} \, dz.
\]

\textbf{Step 5: Evaluate the Integral.}  
Substitute \(t = z^2/4\), \(dt = z/2 \, dz\):
\[
\int_0^\infty z e^{-z^2/4} \, dz = 2 \int_0^\infty e^{-t} \, dt = 2.
\]
Thus:
\[
E\left[ \max(Z, 0) \right] = \frac{1}{2\sqrt{\pi}} \cdot 2 = \frac{1}{\sqrt{\pi}}.
\]
\end{proof}

\subsection{ReLU Activation Function}

Consider the ReLU activation function \( f(z) = \max(0, z) \). We aim to compute the expected M-loss:
\[
\mathbb{E}[D_{\mathrm{ReLU}}] = \mathbb{E}_{x, a, b} \left[ \left| f\left( \frac{a + b}{2} \right) - \frac{f(a) + f(b)}{2} \right| \right],
\]
where:
\begin{itemize}
    \item \( x \sim \text{Uniform}(-k, k) \): Approximates the embedding through the pretrained model.
    \item \( a, b \sim \mathcal{N}(x, \sigma^2) \): Approximate embeddings through fine-tuned models, centered around \( x \).
    \item Assume $k \gg \sigma$, as fine-tuned models have small amount shift from pre-trained model.
\end{itemize}

\textbf{Step 1: Variable Transformation}

Define:
\[
\delta_a = a - x \sim \mathcal{N}(0, \sigma^2), \quad \delta_b = b - x \sim \mathcal{N}(0, \sigma^2).
\]
Thus,
\[
s = \frac{a + b}{2} = x + \frac{\delta_a + \delta_b}{2}.
\]

\textbf{Step 2: Expression for M-loss (\( D \))}

Under the ReLU activation, the M-loss becomes:
\[
D = \left| \max\left(0, x + \frac{\delta_a + \delta_b}{2}\right) - \frac{\max\left(0, x + \delta_a\right) + \max\left(0, x + \delta_b\right)}{2} \right|.
\]

\textbf{Step 3: Case Analysis}

The discrepancy \( D \) is non-zero only when \( a \) and \( b \) are on opposite sides of zero. Specifically:
\begin{enumerate}
    \item \textbf{Case 1}: Both \( a \geq 0 \) and \( b \geq 0 \) \(\Rightarrow\) \( D = 0 \).
    \item \textbf{Case 2}: Both \( a < 0 \) and \( b < 0 \) \(\Rightarrow\) \( D = 0 \).
    \item \textbf{Case 3}: \( a \geq 0 \) and \( b < 0 \) (denoted as Case A) \(\Rightarrow\) \( D > 0 \).
    \item \textbf{Case 4}: \( a < 0 \) and \( b \geq 0 \) (symmetrical to Case A) \(\Rightarrow\) \( D > 0 \).
\end{enumerate}

Due to symmetry, Cases A and A' (Cases 3 and 4) are identical in their contributions to \( D \).

\textbf{Step 4: Probability of Non-Zero \( D \)}

The probability that \( D > 0 \) given \( x \) is:
\[
P(D > 0 | x) = 2 \times P(a \geq 0, b < 0 | x) = 2 \times \Phi\left(\frac{x}{\sigma}\right) \times \Phi\left(-\frac{x}{\sigma}\right),
\]
where \( \Phi(\cdot) \) is the cumulative distribution function (CDF) of the standard normal distribution.

\textbf{Step 5: Expectation of \( D \) Given \( x \)}

In Case A (\( a \geq 0 \) and \( b < 0 \)):
\[
D = \frac{|x + \delta_a|}{2}.
\]
Thus, by the symmetric distribution of $x$:
\[
\mathbb{E}[D | x, \text{Case A}] = \frac{\mathbb{E}[|\delta_a|]}{2} = \frac{\sigma \sqrt{\frac{2}{\pi}}}{2} = \frac{\sigma}{\sqrt{2\pi}}.
\]

\textbf{Step 6: Total Expectation Over \( x \)}

Taking into account both Cases A and A' (due to symmetry):
\[
\mathbb{E}[D_{\mathrm{ReLU}}] = \mathbb{E}_{x} \left[ \mathbb{E}[D | x] \right] = \mathbb{E}_{x} \left[ \frac{\sigma}{\sqrt{2\pi}} \times 2 \times \Phi\left(\frac{x}{\sigma}\right) \Phi\left(-\frac{x}{\sigma}\right) \right].
\]
Simplifying:
\[
\mathbb{E}[D_{\mathrm{ReLU}}] = \frac{\sigma}{\sqrt{2\pi}} \times \frac{1}{2k} \int_{-k}^{k} 2 \Phi\left(\frac{x}{\sigma}\right) \Phi\left(-\frac{x}{\sigma}\right) dx = \frac{\sigma}{\sqrt{2\pi} \times k} \int_{-k}^{k} \Phi\left(\frac{x}{\sigma}\right) \Phi\left(-\frac{x}{\sigma}\right) dx.
\]

\textbf{Step 7: Approximating the Integral}

As \( \sigma \ll k \), the significant contribution to the integral comes from the vicinity of \( x = 0 \). Therefore, we approximate:
\[
\int_{-k}^{k} \Phi\left(\frac{x}{\sigma}\right) \Phi\left(-\frac{x}{\sigma}\right) dx \approx \int_{-\infty}^{\infty} \Phi\left(\frac{x}{\sigma}\right) \Phi\left(-\frac{x}{\sigma}\right) dx.
\]
Let \( u = \frac{x}{\sigma} \), hence \( dx = \sigma du \):
\[
\int_{-\infty}^{\infty} \Phi(u) \Phi(-u) \sigma du = \sigma \int_{-\infty}^{\infty} \Phi(u) \Phi(-u) du.
\]
By \cref{lemma:phi_integral}:
\[
\int_{-\infty}^{\infty} \Phi(u) \Phi(-u) du = \frac{1}{\sqrt{\pi}},
\]

\textbf{Step 8: Final Expression for \( \mathbb{E}[D_{\mathrm{ReLU}}] \)}

Substituting the approximation into the expectation:
\[
\mathbb{E}[D_{\mathrm{ReLU}}] \approx \frac{\sigma^2}{\sqrt{2}\pi k}.
\]

\subsection{GELU Activation Function}

Consider the Gaussian Error Linear Unit (GELU) activation function defined as:
\[
f(z) = z \cdot \Phi\left(z \sqrt{\frac{2}{\pi}}\right),
\]
where \(\Phi(\cdot)\) is the cumulative distribution function (CDF) of the standard normal distribution. We aim to compute the expected M-loss:
\[
\mathbb{E}[D_{\mathrm{GELU}}] = \mathbb{E}_{x, a, b} \left[ \left| f\left( \frac{a + b}{2} \right) - \frac{f(a) + f(b)}{2} \right| \right],
\]
where:
\begin{itemize}
    \item \( x \sim \text{Uniform}(-k, k) \): Approximates the embedding through the pretrained model.
    \item \( a, b \sim \mathcal{N}(x, \sigma^2) \): Approximate embeddings through fine-tuned models, centered around \( x \).
    \item Assume \( k \gg \sigma \), as fine-tuned models have small shifts from the pre-trained model.
\end{itemize}

\textbf{Step 1: Small Perturbation Expansion (Taylor Approximation)}

Since \(\sigma \ll k\), the perturbations \(\delta_a, \delta_b \sim \mathcal{N}(0, \sigma^2)\) are small. We expand \(f(a)\) and \(f(b)\) around \(x\) using a second-order Taylor series:
\[
\begin{aligned}
f(a) &\approx f(x) + f'(x)\delta_a + \frac{1}{2}f''(x)\delta_a^2, \\
f(b) &\approx f(x) + f'(x)\delta_b + \frac{1}{2}f''(x)\delta_b^2.
\end{aligned}
\]
Similarly, the midpoint \(s = x + \frac{\delta_a + \delta_b}{2}\) is expanded as:
\[
f(s) \approx f(x) + f'(x)\left(\frac{\delta_a + \delta_b}{2}\right) + \frac{1}{2}f''(x)\left(\frac{\delta_a + \delta_b}{2}\right)^2.
\]

\textbf{Step 2: Approximate Expression for M-loss}

Substituting these expansions into \(D = |f(s) - \frac{f(a) + f(b)}{2}|\) and retaining terms up to second order, we obtain:
\[
\begin{aligned}
D &\approx \left| f'(x) \cdot \frac{\delta_a + \delta_b}{2} + \frac{f''(x)}{2} \cdot \frac{(\delta_a + \delta_b)^2}{4} - \frac{f'(x)(\delta_a + \delta_b) + \frac{f''(x)}{2}(\delta_a^2 + \delta_b^2)}{2} \right| \\
&= \left| \frac{f''(x)}{8} \left[ (\delta_a + \delta_b)^2 - 2(\delta_a^2 + \delta_b^2) \right] \right| \\
&= \frac{|f''(x)|}{8} \left| -\delta_a^2 + 2\delta_a\delta_b - \delta_b^2 \right| \\
&= \frac{|f''(x)|}{8} (\delta_a - \delta_b)^2.
\end{aligned}
\]

\textbf{Step 3: Expectation Calculation}

Using the independence of \(\delta_a\) and \(\delta_b\) (covariance is zero), the conditional expectation is:
\[
\mathbb{E}[D | x] \approx \frac{|f''(x)|}{8} \mathbb{E}\left[ (\delta_a - \delta_b)^2 \right] = \frac{|f''(x)|}{8} \cdot 2\sigma^2 = \frac{\sigma^2 |f''(x)|}{4}.
\]

\textbf{Step 4: Global Expectation Integration}

Since \(x \sim \text{Uniform}(-k, k)\), the total expectation is:
\[
\mathbb{E}[D_{\mathrm{GELU}}] \approx \frac{\sigma^2}{4k} \int_{-k}^{k} |f''(x)| dx.
\]
Under the assumption \(k \gg \sigma \), and $|f''(x)|$ vanishes with large absolute value of $x$, the integral limits can be extended to infinity:
\[
\int_{-k}^{k} |f''(x)| dx \approx \int_{-\infty}^{\infty} |f''(x)| dx.
\]

\textbf{Step 5: Second Derivative of GELU}

The second derivative of GELU, \(f''(x)\), is given by:
\[
f''(x) = \sqrt{\frac{2}{\pi}} \phi\left(x \sqrt{\frac{2}{\pi}}\right) \left(2 + x^2 \cdot \frac{2}{\pi}\right),
\]
where \(\phi(u) = \frac{1}{\sqrt{2\pi}} e^{-u^2/2}\) is the standard normal PDF. Since \(f''(x)\) is even and non-negative, the integral simplifies to:
\[
\int_{-\infty}^{\infty} |f''(x)| dx = 2 \int_{0}^{\infty} f''(x) dx = 2 \left[ f'(x) \right]_0^{\infty}.
\]
Noting that \(f'(x) \to 1\) as \(x \to \infty\) and \(f'(0) = \Phi(0) = \frac{1}{2}\), we have:
\[
\int_{-\infty}^{\infty} |f''(x)| dx = 2 \left(1 - \frac{1}{2}\right) = 1.
\]

\textbf{Step 6: Final Result}

Substituting the integral result, we obtain:

\[
\mathbb{E}[D_{\mathrm{GELU}}] \approx \frac{\sigma^2}{4k} \cdot 1 = \frac{\sigma^2}{4k}.
\]

\subsection{Leaky ReLU Activation Function}

Consider the Leaky ReLU activation function defined as:
\[
f(z) = \begin{cases} 
z & \text{if } z \geq 0, \\
\alpha z & \text{if } z < 0, 
\end{cases}
\]
where \( \alpha \) is a small constant (e.g., 0.01). We aim to compute the expected M-loss:
\[
\mathbb{E}[D_{\mathrm{LeakyReLU}}] = \mathbb{E}_{x, a, b} \left[ \left| f\left( \frac{a + b}{2} \right) - \frac{f(a) + f(b)}{2} \right| \right],
\]
where:
\begin{itemize}
    \item \( x \sim \text{Uniform}(-k, k) \): Approximates the embedding through the pretrained model.
    \item \( a, b \sim \mathcal{N}(x, \sigma^2) \): Approximate embeddings through fine-tuned models, centered around \( x \).
    \item Assume $k \gg \sigma$, as fine-tuned models have small amount shift from pre-trained model.
\end{itemize}

\textbf{Step 1: Variable Transformation}

Define:
\[
\delta_a = a - x \sim \mathcal{N}(0, \sigma^2), \quad \delta_b = b - x \sim \mathcal{N}(0, \sigma^2).
\]
Thus,
\[
s = \frac{a + b}{2} = x + \frac{\delta_a + \delta_b}{2}.
\]

\textbf{Step 2: Expression for M-loss (\( D \))}

Under the Leaky ReLU activation, the M-loss becomes:
\[
D = \left| f(s) - \frac{f(a) + f(b)}{2} \right| = \left| 
\begin{cases} 
s & \text{if } s \geq 0, \\
\alpha s & \text{if } s < 0 
\end{cases} 
- \frac{f(a) + f(b)}{2} 
\right|.
\]

\textbf{Step 3: Case Analysis}

The discrepancy \( D \) arises when \( a \) and \( b \) are on opposite sides of zero. Specifically:
\begin{enumerate}
    \item \textbf{Case 1}: Both \( a \geq 0 \) and \( b \geq 0 \) \(\Rightarrow\) \( D = 0 \).
    \item \textbf{Case 2}: Both \( a < 0 \) and \( b < 0 \) \(\Rightarrow\) \( D = 0 \).
    \item \textbf{Case 3}: \( a \geq 0 \) and \( b < 0 \) (denoted as Case A) \(\Rightarrow\) \( D > 0 \).
    \item \textbf{Case 4}: \( a < 0 \) and \( b \geq 0 \) (symmetrical to Case A) \(\Rightarrow\) \( D > 0 \).
\end{enumerate}

Due to symmetry, Cases A and A' (Cases 3 and 4) are identical in their contributions to \( D \).

\textbf{Step 4: Probability of Non-Zero \( D \)}

The probability that \( D > 0 \) given \( x \) is:
\[
P(D > 0 | x) = 2 \times P(a \geq 0, b < 0 | x) = 2 \times \Phi\left(\frac{x}{\sigma}\right) \times \Phi\left(-\frac{x}{\sigma}\right),
\]
where \( \Phi(\cdot) \) is the cumulative distribution function (CDF) of the standard normal distribution.

\textbf{Step 5: Expectation of \( D \) Given \( x \)}

In Case A (\( a \geq 0 \) and \( b < 0 \)):

\[
D = \left| 
\begin{cases} 
s - \frac{a + b}{2} & \text{if } s \geq 0, \\
\alpha s - \frac{f(a) + f(b)}{2} & \text{if } s < 0 
\end{cases} 
\right|.
\]

Substituting \( a = x + \delta_a \) and \( b = x + \delta_b \):

\[
s = x + \frac{\delta_a + \delta_b}{2}.
\]

Thus,

\[
D = \left| 
\begin{cases} 
x + \frac{\delta_a + \delta_b}{2} - \frac{(x + \delta_a) + (x + \delta_b)}{2} & \text{if } s \geq 0, \\
\alpha \left(x + \frac{\delta_a + \delta_b}{2}\right) - \frac{(x + \delta_a) + \alpha (x + \delta_b)}{2} & \text{if } s < 0 
\end{cases} 
\right|.
\]

Simplifying each subcase:

\textbf{Subcase A1: \( s \geq 0 \)}

\[
D = \left| x + \frac{\delta_a + \delta_b}{2} - \frac{2x + \delta_a + \delta_b}{2} \right| = \left| x + \frac{\delta_a + \delta_b}{2} - x - \frac{\delta_a + \delta_b}{2} \right| = 0.
\]
Thus, \( D = 0 \) in this subcase.

\textbf{Subcase A2: \( s < 0 \)}

\[
D = \left| \alpha \left(x + \frac{\delta_a + \delta_b}{2}\right) - \frac{(x + \delta_a) + \alpha (x + \delta_b)}{2} \right|.
\]
Simplifying:
\[
D = \left| \alpha x + \frac{\alpha (\delta_a + \delta_b)}{2} - \frac{(1 + \alpha)x + \delta_a + \alpha \delta_b}{2} \right| = \left| \alpha x + \frac{\alpha \delta_a + \alpha \delta_b}{2} - \frac{(1 + \alpha)x + \delta_a + \alpha \delta_b}{2} \right|.
\]
\[
= \left| \left( \alpha x - \frac{(1 + \alpha)x}{2} \right) + \left( \frac{\alpha \delta_a + \alpha \delta_b - \delta_a - \alpha \delta_b}{2} \right) \right| = \left| \frac{2\alpha - (1 + \alpha)}{2} x + \frac{(\alpha - 1) \delta_a}{2} \right|.
\]
\[
= \left| \frac{\alpha - 1}{2} x + \frac{(\alpha - 1) \delta_a}{2} \right| = \left| \frac{1 - \alpha}{2} (x + \delta_a) \right|.
\]
(Since \( \alpha < 1 \), \( \frac{\alpha - 1}{2} = -\frac{1 - \alpha}{2} \), and the absolute value removes the negative sign.)
\[
D = \frac{1 - \alpha}{2} |x + \delta_a |.
\]

\textbf{Step 6: Expectation in Subcase A2}

As \( \sigma \ll k \), with the symmetric distribution of $x$, we approximate:
\[
\mathbb{E}[|x + \delta_a | | s <0, a \geq 0, b <0] \approx \mathbb{E}[|\delta_a|] = \sigma \sqrt{\frac{2}{\pi}}.
\]
Thus,
\[
\mathbb{E}[D | x, \text{Case A2}] = \frac{1 - \alpha}{2} \times \sigma \sqrt{\frac{2}{\pi}}.
\]

\textbf{Step 7: Total Expectation Over \( x \)}

Considering both Cases A and A' (due to symmetry):
\[
\mathbb{E}[D_{\mathrm{LeakyReLU}}] = \mathbb{E}_{x} \left[ \mathbb{E}[D | x] \right] = \mathbb{E}_{x} \left[ \frac{1 - \alpha}{2} \times \sigma \sqrt{\frac{2}{\pi}} \times 2 \Phi\left(\frac{x}{\sigma}\right) \Phi\left(-\frac{x}{\sigma}\right) \right].
\]
Simplifying:
\[
\mathbb{E}[D_{\mathrm{LeakyReLU}}] = \frac{1 - \alpha}{2} \times \sigma \sqrt{\frac{2}{\pi}} \times \frac{1}{2k} \int_{-k}^{k} 2 \Phi\left(\frac{x}{\sigma}\right) \Phi\left(-\frac{x}{\sigma}\right) dx.
\]
\[
= \frac{1 - \alpha}{2} \times \sigma \sqrt{\frac{2}{\pi}} \times \frac{1}{2k} \times 2 \times \frac{\sigma}{\sqrt{\pi}} = \frac{(1 - \alpha) \sigma^2}{\sqrt{2}\pi k}.
\]
(In the above expression we use \cref{lemma:phi_integral} to calculate the integral.)

\section{Hyperparameter Range for Experiment on ViT-B/32 Model} \label{hypp}

\begin{table}[htbp]
\centering
\caption{Hyperparameter searching range for different methods in the experiment of ViT-B/32 model. The best settings are in bold.}
\label{tab:hyperparameters}
\begin{tabular}{ll}
\\
\toprule
\textbf{Method} & \textbf{Hyperparameters} \\
\midrule
TIES & $k = \textbf{0.2}, 0.3, 0.4, 0.5$ \\
M-TIES & $k = \textbf{0.2}, 0.3, 0.4, 0.5$, fixed $e = \textbf{0.1}$ \\
Task Arithmetic & $\lambda = 0.5, 0.8, 1.0, 1.2, \textbf{1.5}, 2.0$ \\
DARE & $k = 0.3, 0.5, 0.7, \textbf{0.8}$ \\
\bottomrule
\end{tabular}
\end{table}

\begin{table}[htbp]
\centering
\caption{Fixed hyperparameter for different methods in the experiment of ViT-L/14 model.}
\label{tab:hyperparameters}
\begin{tabular}{ll}
\\
\toprule
\textbf{Method} & \textbf{Hyperparameters} \\
\midrule
TIES & $k =  0.4$ \\
M-TIES & $k = 0.4$, $e = 0.1$ \\
Task Arithmetic & $\lambda =  1.5$ \\
DARE & $k = 0.8$ \\
\bottomrule
\end{tabular}
\end{table}
\end{document}